\documentclass{article}

\usepackage{microtype}
\usepackage{graphicx}
\usepackage{subcaption}
\usepackage{booktabs} % for professional tables

\usepackage{hyperref}

\usepackage[accepted]{icml2026}

\usepackage{amsmath}
\usepackage{amssymb}
\usepackage{mathtools}
\usepackage{amsthm}
\usepackage{tabularray}

\usepackage{float}

\usepackage[capitalize,noabbrev]{cleveref}

\theoremstyle{plain}
\newtheorem{theorem}{Theorem}[section]
\newtheorem{proposition}[theorem]{Proposition}

\theoremstyle{definition}

\theoremstyle{remark}

\usepackage[textsize=tiny]{todonotes}

\newcommand{\real}{\mathbb{R}}
\newcommand{\stsp}{\mathcal{S}}
\newcommand{\acsp}{\mathcal{A}}
\newcommand{\sphere}{\mathbb{S}}
\newcommand{\dataset}{\mathcal{D}}

\icmltitlerunning{Improving Zero-Shot Offline RL via Behavioral Task Sampling}

\begin{document}

\twocolumn[
  \icmltitle{Improving Zero-Shot Offline RL via Behavioral Task Sampling}

  % It is OKAY to include author information, even for blind submissions: the
  % style file will automatically remove it for you unless you've provided
  % the [accepted] option to the icml2026 package.

  % List of affiliations: The first argument should be a (short) identifier you
  % will use later to specify author affiliations Academic affiliations
  % should list Department, University, City, Region, Country Industry
  % affiliations should list Company, City, Region, Country

  % You can specify symbols, otherwise they are numbered in order. Ideally, you
  % should not use this facility. Affiliations will be numbered in order of
  % appearance and this is the preferred way.
  \icmlsetsymbol{equal}{*}

  \begin{icmlauthorlist}
    \icmlauthor{Nazim Bendib}{yyy}
    \icmlauthor{Nicolas Perrin-Gilbert}{yyy}
    \icmlauthor{Olivier Sigaud}{yyy}

    %\icmlauthor{}{sch}
    %\icmlauthor{}{sch}
  \end{icmlauthorlist}

  % \icmlaffiliation{yyy}{Department of XXX, University of YYY, Location, Country}
  \icmlaffiliation{yyy}{Institut des Systèmes Intelligents et de
Robotique, ISIR, Sorbonne Université, Paris, France}
  % \icmlaffiliation{comp}{Company Name, Location, Country}
  % \icmlaffiliation{sch}{School of ZZZ, Institute of WWW, Location, Country}

  \icmlcorrespondingauthor{Nazim Bendib}{nazim.bendib@sorbonne-universite.fr}

  % You may provide any keywords that you find helpful for describing your
  % paper; these are used to populate the "keywords" metadata in the PDF but
  % will not be shown in the document
  \icmlkeywords{Machine Learning, ICML}

  \vskip 0.3in
]

% this must go after the closing bracket ] following \twocolumn[ ...

% This command actually creates the footnote in the first column listing the
% affiliations and the copyright notice. The command takes one argument, which
% is text to display at the start of the footnote. The \icmlEqualContribution
% command is standard text for equal contribution. Remove it (just {}) if you
% do not need this facility.

% Use ONE of the following lines. DO NOT remove the command.
% If you have no special notice, KEEP empty braces:
\printAffiliationsAndNotice{}  % no special notice (required even if empty)
% Or, if applicable, use the standard equal contribution text:
% \printAffiliationsAndNotice{\icmlEqualContribution}

\begin{abstract}
Offline zero-shot reinforcement learning (RL) aims to learn agents that optimize unseen reward functions without additional environment interaction. The standard approach to this problem trains task-conditioned policies by sampling task vectors that define linear reward functions over learned state representations. In most existing algorithms, these task vectors are randomly sampled, implicitly assuming this adequately captures the structure of the task space. We argue that doing so leads to suboptimal zero-shot generalization. To address this limitation, we propose extracting task vectors directly from the offline dataset and using them to define the task distribution used for policy training. We introduce a simple and general reward function extraction procedure that integrates into existing offline zero-shot RL algorithms. Across multiple benchmark environments and baselines, our approach improves zero-shot performance by an average of 20\%, highlighting the importance of principled task sampling in offline zero-shot RL.
\end{abstract}

\section{Introduction}

Modern reinforcement learning (RL) agents are increasingly expected to solve not a single task, but many different tasks that are specified only at deployment time \cite{doncieux2018open, sigaud2023definition, hughes2024open}. In fields such as natural language processing \cite{brown2020language, wei2021finetuned} and computer vision \cite{radford2021learning, alayrac2022flamingo, kirillov2023segment}, this is largely achieved by training general models that can perform new tasks in a zero-shot manner when given an appropriate task specification. Achieving a similar level of flexibility in RL remains challenging, particularly when agents must operate from offline data.

A common approach to offline zero-shot RL (ZSRL) is to learn reward-conditioned policies trained on a parametric family of reward functions. Methods such as Successor Features (SF) \cite{borsa2018universal, touati2022does} and Forward–Backward representations (FB) \cite{touati2021learning} follow this paradigm. They can be decomposed into two components: (1) a reward generation mechanism that defines a space of reward functions, and (2) a policy learning mechanism that trains a conditional policy to maximize reward functions sampled from this space. In both SF methods and FB, reward functions take the form $R_z(s)=\phi(s)^\top z$, where $\phi(s) \in \real^d$ is a learned state embedding, representing the state features and defining a feature space within $\real^d$, and $z$ is a task vector sampled from a predefined distribution. The state representation $\phi(s)$ is intended to capture semantically meaningful properties of the state (e.g., velocity, orientation), while the task vector $z$ specifies how these properties are weighted to define a particular task and reward function.

Prior work has primarily focused on learning rich representations $\phi(s)$ that best summarize the environment dynamics, under the assumption that richer representations induce a richer space of reward functions. In contrast, policy training is treated as a secondary component, with conditional policies trained using reward functions obtained by uniformly sampling task vectors from a unit hypersphere $\sphere^{d-1} \subset \real^d$.

We argue that this task sampling strategy suffers from a mismatch between the geometry of the task space and the physics of the environment. 
Although tasks are sampled to cover all possible directions, the environment dynamics limit the possible behaviors. As a result, most sampled tasks end up being poorly matched to the agent’s capabilities. 
% In high-dimensional settings, due to the concentration of measure phenomenon \cite{ledoux2001concentration}, uniformly sampled tasks tend to be almost orthogonal to the behaviors the agent can produce. 
In high-dimensional settings, a random direction is nearly orthogonal to any fixed low-dimensional subspace. Since the achievable behaviors of the agent span only a small region of the feature space, most uniformly sampled task vectors are poorly aligned with these achievable behaviors.
When this happens, the reward signal becomes very weak for nearly all behaviors, making them appear similarly ineffective.
This becomes hard for the agent to tell which behaviors are better, slowing down learning and leading to weak performance, not only on the sampled training tasks, but also when generalizing to new ones.

In this work, we address the challenge of suboptimal zero-shot generalization in offline reinforcement learning by moving beyond the standard practice of uniform task sampling. We demonstrate both theoretically and empirically that sampling task vectors uniformly from a high-dimensional hypersphere leads to "signal dilution", where reward variations vanish, and the learning signal becomes dominated by noise. To mitigate this, we introduce sampling from the Behavioral Task Distribution (BTD), a simple yet effective method that extracts task vectors directly from the offline dataset to ensure they reflect what is actually achievable in the environment.

Our approach is simple, method-agnostic, and can be integrated into existing offline ZSRL frameworks without modifying their representation learning objectives. Across multiple benchmark environments and baseline methods, learning the task distribution yields an average +20\% increase in zero-shot performance.

Our main contributions are: 
\begin{itemize}
    \item Identifying uniform task sampling as a key bottleneck in offline zero-shot RL.
    \item Theoretically characterizing the resulting \emph{signal dilution} failure mode.
    \item Introducing BTD-sampling, a simple plug-and-play method that replaces uniform sampling with a data-driven behavioral task distribution.
    \item Demonstrating an average $+20\%$ improvement in zero-shot performance across multiple environments, baselines, and task dimensions.
\end{itemize}

\section{Related Work}

\subsection{Zero-shot Reinforcement Learning}
Zero-shot reinforcement learning (ZSRL) has been studied through several families of approaches. A prominent line of work is based on successor features (SF) \cite{dayan1993improving, barreto2017successor, borsa2018universal, chen2023self}, which learn universal value functions expressed as linear combinations of predefined or learned state features. These state features are typically obtained from representation learning techniques such as autoencoders \cite{bank2023autoencoders}, contrastive learning \cite{chen2020simple}, low-rank decompositions \cite{touati2022does, jeen2024zero}, or latent predictive models \cite{grill2020bootstrap, lawson2025self}. Extensions like the FB method \cite{touati2021learning, touati2022does} build on the SF framework while avoiding explicit learning of state features altogether.
Recent work has further extended FB for policy distillation from expert data \citep{tirinzoni2025zero} and for generalization across changing dynamics \citep{bobrin2025zero}.
A key advantage of SF-based methods is their low inference cost: at test time, a near-optimal policy for a new task can be obtained without additional training or planning, often via a simple regression problem.
Another class of approaches focuses on learning general world models \cite{yu2023language, ding2024diffusion, jiang2023h}, which enable ZSRL when combined with planning algorithms. Finally, some methods aim to generalize across tasks by learning latent encodings of reward functions during training \cite{frans2024unsupervised, ingebrand2024zero}, typically by sampling and optimizing over randomly generated rewards.
This work builds upon SF based methods and FB, which represent state-of-the-art for ZSRL due to their strong performance and minimal test-time overhead, and seeks to further enhance their performance.

\subsection{Adaptive Goal Sampling}
Most goal-conditioned reinforcement learning (GCRL) methods \cite{liu2022goal} assume a fixed, often uniform, distribution over goals for simplicity and benchmarking \cite{schaul2015universal, andrychowicz2017hindsight}, despite this being inefficient due to trivial, unreachable, or redundant goals. 
Consequently, several works move beyond uniform sampling by actively selecting or reweighting goals during training. This includes automatic curriculum \cite{portelas2020automatic, castanet2022stein} and autotelic methods \cite{colas2022autotelic} that adapt goal sampling online based on competence or learning progress \cite{florensa2018automatic}. Specifically, novelty-based approaches prioritize goals at the exploration frontier to discover new state-space regions \cite{ren2019exploration, campero2020learning}, while density-based methods leverage statistical estimators to prioritize goals in sparsely visited areas of the goal space \cite{pitis2020maximum, pong2019skew}. 
While those methods focus on online adaptation, in this work, we apply similar principles to the offline setting by replacing the uniform sampling of training goals with a learned behavioral task distribution. This ensures that training tasks are grounded in behavioral data, leading to more efficient learning and superior zero-shot generalization.

\subsection{Inverse Reinforcement Learning}

Inverse Reinforcement Learning (IRL) aims to recover reward functions from expert behavior \citep{ng2000algorithms}, later evolving into the Maximum Entropy framework to handle reward ambiguity \citep{ziebart2008maximum}. While Deep MaxEnt IRL and Guided Cost Learning scaled these methods to high-dimensional spaces \citep{wulfmeier2015deep, finn2016guided}, adversarial approaches like GAIL and AIRL further refined distribution matching and reward portability \citep{ho2016generative, fu2018learning}. Recent innovations include non-adversarial efficiency via IQ-Learn \citep{garg2021iq} and addressing complex real-world factors like demonstrator expertise \citep{beliaev2024inverse}, multi-agent dynamics \citep{freihaut2024feasible}, and history-dependent behavior \citep{ke2025inverse}.
While IRL is not designed for ZSRL, our method is closely related: rather than inferring a single reward, it extracts a distribution of diverse reward functions from the offline dataset, enabling the training of a generalist agent.

\section{Background}

In the following, we present the problem setup and notations used throughout the paper, and review Successor Features (SF) methods \cite{borsa2018universal} and Forward–Backward (FB) representations \cite{touati2021learning}. 
%state-of-the-art approaches in ZSRL that our method builds upon and is evaluated against.

\subsection{Problem Setup and Notations}
We define a Markov Decision Process (MDP) by the tuple $\mathcal{M} = \langle \stsp, \acsp, \mathcal{P}, \mu, \gamma \rangle$, where $\stsp$ is the state space, $\acsp$ is the action space, $\mathcal{P}: \stsp \times \acsp \times \stsp \to [0, 1]$ is the transition probability distribution such that $\mathcal{P}(s'|s, a) = \mathbb{P}(s_{t+1}=s' | s_t=s, a_t=a)$, $\mu: \stsp \to [0, 1]$ is the initial state distribution $s_0 \sim \mu(s_0)$, and $\gamma \in [0, 1)$ is the discount factor.
We operate in the offline setting, where the agent learns from a fixed dataset $\dataset = \{\tau_i\}_{i=1}^N$ consisting of trajectories collected by a single or multiple unknown behavior policies.

Let $\phi: \stsp \to \real^d$ denote a bounded learned state embedding. We consider linear reward functions of the form $R_z(s) = \phi(s)^\top z$, parameterized by a task vector $z \in \sphere^{d-1} \subset \real^d$ where $\sphere^{d-1}$ is the unit $d$-sphere.
% We focus on the setting where the feature dimension $d$ is sufficiently large, ensuring that the state representation $\phi(s)$ is expressive enough to span a diverse set of reward functions.
Let $\Pi$ be the set (population) of all policies $\pi: \stsp \to \Delta(\acsp)$, with $\Delta(\mathcal{X})$ denoting the set of all probability distributions over a set $\mathcal{X}$. 

\subsection{Successor Features}

Given a state embedding $\phi(s) \in \real^d$ learned via some criteria (e.g., autoencoders or low-rank approximations), SF methods learn the successor features of a family of policies $\pi_z$ for all task vectors $z \in \real^d$:

\begin{equation}
\begin{cases}
\psi(s_0, a_0, z) = \mathbb{E} \Big[ \sum_{t \ge 0} \gamma^t \phi(s_t) \ \Big| \ s_0, a_0, \pi_z \Big],\\[2mm]
\pi_z(s) := \arg\max_a \psi(s, a, z)^\top z
\end{cases}
\end{equation}

The successor features $\psi_\pi$ satisfy the $\real^d$-valued Bellman equation $\psi^\pi = \phi + \gamma P_\pi \psi^\pi$ with $P_\pi$ the policy-induced transition matrix. Therefore, we can train $\psi$ by minimizing the Bellman residual
\begin{equation}
\mathbb{E}_{\substack{(s_t, a_t, s_{t+1}) \\ \sim \dataset}}
\bigg\| \psi(s_t, a_t, z) - \phi(s_t) - \gamma \bar\psi(s_{t+1}, \pi_z(s_{t+1}), z)   \bigg\|^2
\end{equation}
where $\bar\psi$ is a non-trainable target version of $\psi$, as in Deep Q-learning \cite{mnih2013playing}. This objective can be improved since we do not use the full vector $\psi(s, a, z)$; only the scalar $\psi(s, a, z)^\top z$ is required for defining the policies. Instead, we can minimize
\begin{equation}
\begin{split}
\mathbb{E}_{(s_t, a_t, s_{t+1}) \sim \dataset}
\bigg( \psi(s_t, a_t, z)^\top z - \phi(s_t)^\top z \\ - \gamma \bar\psi(s_{t+1}, \pi_z(s_{t+1}), z)^\top z  \bigg)^2.
\end{split}
\end{equation}
This trains $\psi(s, a, z)^\top z$ as the Q-function $Q(s, a, z)$ corresponding to the reward $R_z(s) = \phi(s)^\top z$.

Once a test reward $R_{\text{test}}(\cdot)$ is revealed, we use a set of evaluated samples $\mathcal{D}_r = \{ (s_i, R_{\text{test}}(s_i)) \}_{i=1}^{N_r}$ 
to estimate the task vector $z_{\text{test}}$.
 Specifically, we solve
\begin{align}
z_{\text{test}}
&:= \arg\min_z \mathbb{E}_{s \sim \dataset_r} \bigg[ \big(R_{\text{test}}(s) - \phi(s)^\top z\big)^2 \bigg] \\
&= \mathbb{E}_{\dataset_r}[\phi \phi^\top]^{-1} \mathbb{E}_{r}[\phi R_{\text{test}}],
\end{align}
and define the policy 
$$\pi_{z_{\text{test}}}(s) = \arg\max_{a \in \acsp} \psi(s, a, z_{\text{test}})^\top z_{\text{test}}.$$
This policy is guaranteed to be optimal for all rewards in the linear span of the features $\phi$.

\subsection{Forward Backward}

Let $M^\pi$ be the successor measure defined as:
\begin{equation}
M^\pi(s_0, a_0, s) := \mathbb{E} \bigg[\sum_{t \geq 0} \gamma^t \mathbb{I}\{s_t = s\} \bigg| s_0, a_0, \pi \bigg].
\end{equation}
The goal of FB is to learn representations $F:\stsp \times \acsp \times \real^d \to \real^d$ and $B:\stsp \to \real^d$ to learn a finite-rank model of the measure $M^\pi$ such that:
\[
\begin{cases}
\psi(s_0, a_0, z) = \mathbb{E} \Big[ \sum_{t \ge 0} \gamma^t \phi(s_t) \ \Big| \ s_0, a_0, \pi_z \Big],\\[1mm]
\pi_z(s) := \arg\max_a \psi(s, a, z)^\top z
\end{cases}
\]

The successor measure $M^\pi$ satisfies a Bellman-like equation 
$M^\pi = P + \gamma P_\pi M^\pi$. Therefore, we can train $F$ and $B$ by minimizing the Bellman residual on the parametric model $M=F^\top B$. This results in minimizing:

\begin{equation}
\begin{split}
&\mathbb{E}_{\substack{(s_t, a_t, s_{t+1}) \\ s' \sim \dataset}} 
\Big( F(s_t, a_t, z)^\top B(s') \Big. \\
&\quad \Big. - \gamma \, \bar{F}(s_{t+1}, \pi_z(s_{t+1}), z)^\top \bar{B}(s') \Big)^2 \\
&\quad - 2 \, 
\mathbb{E}_{\substack{(s_t, a_t, s_{t+1}) \\ \sim \dataset}} 
\Big[ F(s_t, a_t, z)^\top B(s_{t+1}) \Big]
\end{split}
\end{equation}

where $\bar F$ and $\bar B$ are non-trainable target versions of $F$ and $B$. Similarly to SF above, only $F(s, a, z)^\top z$ is needed to define the policy. So we include an auxiliary loss to focus training on $F(., z)^\top z$:
\begin{equation}
\begin{split}
\mathbb{E}_{\substack{(s_t, a_t, s_{t+1}) \sim \\ \dataset}} \bigg( F(s_t, a_t, z)^\top z - B(s_{t+1})^\top (\mathbb{E}_\rho B B^\top)^{-1} z \\ - \gamma \bar{F}(s_{t+1}, \pi_z(s_{t+1}), z)^\top z \bigg)^2 .
\end{split}
\end{equation}
This trains $F(s, a, z)^\top z$ as the Q-function $Q(s, a, z)$ to the reward $R_z(s)=B(s)^\top \mathbb{E}[BB^\top]^{-1}z.$

Once a test reward function $R_{\text{test}}(\cdot)$ is revealed, we estimate 
\begin{equation}
z_{test} := \mathbb{E}_{s \sim \dataset_r}[R_{\text{test}} B(s)].
\end{equation}
and similarly define the policy 
$$\pi_{z_{\text{test}}}(s) = \arg\max_{a \in \acsp} F(s, a, z_{\text{test}})^\top z_{\text{test}}.$$
This policy is guaranteed to be optimal for all rewards in the linear span of the features $\phi$.

\subsection{Connections between SF and FB}
FB representations are related to SF, by setting $\psi(s, a, z) := F(s, a, z)$ and $\phi(s) = \mathbb{E}[BB^\top]^{-1} B(s)$. Thus, FB can be used to produce both $\phi$ and $\psi$ in SF, although training is different: SF methods require a trained $\phi$ to train a policy while FB learns $\phi$ simultaneously with the policy.

In what follows, to view SF and FB as instances of the same framework, we standardize the notation by using $\phi(s)$ to denote the learned state embeddings in both methods. This allows a unified treatment of SF and FB representations.

\section{Method}

We define the \textbf{feature occupancy} $\psi^\pi \in \real^d$ of a policy $\pi$ for a given MDP $\mathcal{M}$ as the discounted feature expectation:
\begin{equation}
\psi^\pi := \mathbb{E}_{s_0 \sim \mu}\left[ \sum_{t=0}^\infty \gamma^t \phi(s_t) \,\Big|\, \pi \right].
\end{equation}

The feature occupancy $\psi^\pi$ compresses the behavior of policy $\pi$ into a single vector that summarizes which state features are induced following that policy. While $\phi(s)$ encodes state features (e.g., velocity, contact), $\psi^\pi$ captures their total discounted accumulation along trajectories generated by $\pi$, 
i.e., the region of the feature space that the policy occupies under the MDP dynamics. 
% This representation makes it straightforward to express the expected cumulative return of a policy $\pi$ under a task vector $z$.
In particular, for linear rewards of the form $R_z(s) = \phi(s)^\top z$, the expected discounted return of policy $\pi$ for a reward function $R_z(.)$ can be written as
\begin{align}
J(\pi, z)
&= \mathbb{E}_{s_0 \sim \mu}
\left[ \sum_{t=0}^\infty \gamma^t R_z(s) \Big|  \pi \right] \\
&= \mathbb{E}_{s_0 \sim \mu}
\left[ \sum_{t=0}^\infty \gamma^t \phi(s_t)^\top z \Big|  \pi \right] \\
% &= \mathbb{E}_{s_0 \sim \mu}
% \left[ \sum_{t=0}^\infty \gamma^t \phi(s_t) \Big|  \pi \right]^\top z \\
&= (\psi^\pi)^\top z.
\end{align}
Note that $J(\pi, z) = \| \psi^\pi \|_2 \cos(\psi^\pi, z)$ for $z \in \sphere^{d-1}$, which measures the projection of the feature occupancy induced by policy $\pi$ onto the task direction $z$, jointly reflecting alignment and magnitude along $z$.

Note that feature occupancy $\psi^\pi$ is different from successor features $\psi(s, a, z)$ because feature occupancy is an unconditional expectation over the policy's stationary distribution, while successor features are conditional expectations that depend on a specific starting state and action.

We denote by $\Psi$ the \textbf{behavioral space} of the MDP $\mathcal{M}$, which is the set of all feature occupancies achievable by any policy $\pi \in \Pi$:
\begin{equation}
    \Psi = \{ \psi^\pi \in \real^d \mid \pi \in \Pi \}.
\end{equation}

Since $\phi(s)$ is bounded, each feature occupancy $\psi^\pi$ is a discounted sum of bounded vectors, so there exists $C_\Psi$ with $\|\psi^\pi\|_2 \le C_\Psi$ for all $\pi \in \Pi$, making $\Psi$ a bounded set in $\mathbb{R}^d$.
% Moreover, because the dynamics of the MDP (e.g., the physics of the environment) constrain achievable behaviors, the variability within $\Psi$ is concentrated along a small number of principal directions.

\subsection{Limits of Uniform Sampling}
Since we are training a task-conditioned policy, the goal is to learn $\pi^\star(.|z) = \arg\max_{\pi \in \Pi} J(\pi, z), \forall z \in \sphere^{d-1}$. A critical requirement for learning $\pi^\star$ is that the expected return $J(\pi, z)$ must vary sufficiently over the policy space to distinguish better policies from worse ones. We measure this variation as the variance of expected returns across different policies for a given task. We show that uniform task sampling in high dimensions makes this performance variation vanish.

\begin{proposition}[Vanishing Task-Specific Variance]
\label{proposition:unif}
Let $z \sim \mathrm{Unif}(\mathbb{S}^{d-1})$ be a task vector. 
Let $\Psi \subset \real^d$ be the behavioral space, and let $\Sigma_\Psi$ denote its covariance matrix. Let $\lambda_1 \geq \lambda_2 \geq \dots \geq \lambda_d \geq 0$ be the eigenvalues of $\Sigma_\Psi$.
Then the expected variance of the returns across the policy population satisfies:
\begin{equation}
\mathbb{E}_{z \sim \mathrm{Unif}(\sphere^{d-1})}\!\left[ \mathrm{Var}_{\pi \sim \Pi}(J(\pi, z)) \right]
= \frac{1}{d} \sum_{i=1}^d \lambda_i,
\end{equation}
and converges to $0$ as $d \to \infty$.

\noindent For proof see Appendix~\ref{proof:unif}.
\end{proposition}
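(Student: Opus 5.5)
The plan is to fix $z$, write the variance of the linear functional $J(\pi,z) = (\psi^\pi)^\top z$ as a quadratic form, and then average that form over the sphere.

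\textbf{Step 1: fix $z$.} Make the policy population precise by taking $\pi$ drawn from the distribution over $\Pi$ implicit in $\mathrm{Var}_{\pi\sim\Pi}$. This induces a distribution on $\Psi$ with mean $\bar\psi = \mathbb{E}_\pi[\psi^\pi]$ and covariance $\Sigma_\Psi = \mathbb{E}_\pi[(\psi^\pi-\bar\psi)(\psi^\pi-\bar\psi)^\top]$. Both are finite because $\|\psi^\pi\|_2 \le C_\Psi$. By linearity,
\[
\mathrm{Var}_\pi(J(\pi,z)) = \mathbb{E}_\pi\big[((\psi^\pi-\bar\psi)^\top z)^2\big] = z^\top \Sigma_\Psi z .
\]

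\textbf{Step 2: average over $z$.} Use the trace identity
\[
z^\top\Sigma_\Psi z = \mathrm{tr}(\Sigma_\Psi z z^\top),
\]
so that linearity of trace and expectation gives
\[
\mathbb{E}_z[z^\top\Sigma_\Psi z] = \mathrm{tr}\big(\Sigma_\Psi\, \mathbb{E}_z[zz^\top]\big).
\]
The only substantive fact needed is $\mathbb{E}_{z\sim\mathrm{Unif}(\sphere^{d-1})}[zz^\top] = \tfrac1d I_d$. I would prove it by symmetry:
\begin{itemize}
\item Invariance of the uniform measure under the reflection $z_i \mapsto -z_i$ forces $\mathbb{E}[z_iz_j] = 0$ for $i\ne j$.
\item Invariance under coordinate permutations gives $\mathbb{E}[z_i^2] = \mathbb{E}[z_1^2]$ for every $i$.
\item Since $\sum_i z_i^2 = 1$ almost surely, $\mathbb{E}[z_i^2] = 1/d$.
\end{itemize}
Hence $\mathbb{E}_z[\mathrm{Var}_\pi J] = \tfrac1d\mathrm{tr}(\Sigma_\Psi) = \tfrac1d\sum_i\lambda_i$. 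Alternatively one can diagonalize $\Sigma_\Psi = U\Lambda U^\top$ and use rotation invariance to replace $U^\top z$ by $z$.

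\textbf{Step 3: the limit $d\to\infty$.} This is the main obstacle, because the statement is loose about what is held fixed as $d$ grows. I would bound the trace by the second moment:
\[
\sum_i\lambda_i = \mathrm{tr}(\Sigma_\Psi) = \mathbb{E}_\pi\|\psi^\pi-\bar\psi\|_2^2 \le \mathbb{E}_\pi\|\psi^\pi\|_2^2 \le C_\Psi^2 .
\]
The expected variance is therefore at most $C_\Psi^2/d$, which tends to $0$ provided $C_\Psi$ stays bounded as $d$ grows. The natural justification is that $\phi$ is bounded with a dimension-independent norm bound $\|\phi(s)\|_2\le C_\phi$, giving $C_\Psi \le C_\phi/(1-\gamma)$. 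I would state this uniformity in $d$ explicitly as the interpretation of ``bounded embedding''. Without it the trace could grow linearly in $d$, for instance with coordinatewise-bounded features, and the claim would fail. More weakly, it suffices that $\mathrm{tr}(\Sigma_\Psi) = o(d)$, for example when the behavioral variability is concentrated on a fixed number of directions, in line with the paper's motivation.
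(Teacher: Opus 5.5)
Your Steps 1 and 2 coincide with the paper's proof. The paper likewise writes $\mathrm{Var}_\pi(J(\pi,z)) = z^\top\Sigma_\Psi z$, passes to $\mathrm{Tr}(\Sigma_\Psi\,\mathbb{E}[zz^\top])$, and simply asserts $\mathbb{E}[zz^\top]=\tfrac1d I_d$; your reflection/permutation argument supplies the missing justification.

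The real difference is the limit. The paper does not bound the trace. It postulates that, because the MDP is fixed, the eigenvalues form a sequence with $\lambda_i\to 0$, and concludes by Cauchy's limit theorem (Ces\`aro means). You instead assume a $d$-independent bound $\|\phi(s)\|_2\le C_\phi$. This gives $\mathrm{tr}(\Sigma_\Psi)\le C_\phi^2/(1-\gamma)^2$ and an explicit $O(1/d)$ rate, which the Ces\`aro argument cannot provide.

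The price is generality. The paper's hypothesis lets the total behavioral variance grow sublinearly (e.g.\ $\lambda_i=1/i$ gives $\mathrm{tr}\,\Sigma_\Psi\sim\log d$), and your norm bound excludes this. The restriction matters in practice: an orthonormality regularizer of the kind used in the paper's implementation, $\mathbb{E}_\rho[\phi\phi^\top]\approx I_d$, forces $\mathbb{E}_\rho\|\phi\|_2^2\approx d$, so no $d$-independent $C_\phi$ exists there.

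Your fallback condition $\mathrm{tr}(\Sigma_\Psi)=o(d)$ handles that case. It is exactly necessary and sufficient, because the expected variance equals $\mathrm{tr}(\Sigma_\Psi)/d$. It is also cleaner than the paper's formulation, which treats $\{\lambda_i\}$ as one fixed sequence even though the spectrum of $\Sigma_\Psi$ changes with $d$. Even for nested coordinates the eigenvalues only interlace; the quantities whose partial sums give the trace are the coordinate variances $\mathrm{Var}_\pi(\psi^\pi_i)$.

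Both arguments need a hypothesis beyond the statement as written; yours names it precisely.
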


In Proposition~\ref{proposition:unif}, the quantity $\mathrm{Var}_{\pi \sim \Pi}(J(\pi, z))$ represents the variance of expected returns across the entire policy population $\Pi$ for a fixed task $z$, not the variance in return estimation for a single policy. 
For uniform task sampling, this inter-policy variance is reduced as the ambient dimension $d$ grows. 
Consequently, the gap in expected return between any two policies $\pi_1$ and $\pi_2$ for a task $z \sim \mathrm{Unif}(\sphere^{d-1})$ shrinks toward zero.
% while  $\pi^\star(z) = \arg\max_{\pi \in \Pi} J(\pi, z)$ exists 
%Consequently, the optimal policy $\pi^\star(z)$ for a certain task $z$ is indistinguishable from other policies, even though it mathematically still exists.

% In practice, this vanishing performance gap collapses the signal-to-noise ratio of the learning objective: the signal (the difference in expected returns between policies) shrinks as $d$ increases, whereas the noise (comprising optimization stochasticity and estimation error, particularly in the offline RL setting) does not decay with $d$. 
% When performance differences are minimal, the objective function becomes nearly flat with respect to policy parameters, yielding gradient signals of very small magnitude. 
% As a result, the gradient signal is dominated by the training noise, preventing training from reliably identifying the optimal behaviors. This degrades the generalization capability of the policy, resulting in suboptimal performance on training tasks and even poorer performance on unseen test tasks.

In practice, this vanishing return signal yields gradient signals of very small magnitude, likely to be dominated by the training noise
preventing training from reliably identifying the optimal behaviors. 
This degrades the generalization capability of the policy, resulting in suboptimal performance on training tasks and even poorer performance on unseen test tasks.

\subsection{Dataset-driven task vectors}
\label{sec:p_data}
Rather than sampling task vectors uniformly from the unit hypersphere, we propose constructing tasks directly from the offline dataset.
Concretely, given an offline dataset $\dataset$, we note $\dataset_{sub}$ the set of all possible contiguous subtrajectories of consecutive transitions extracted from $\mathcal{D}$.

We define $p_{data}$ as the empirical set of task vectors extracted from the offline dataset:
\begin{equation}
p_{\text{data}} = \{ z_\tau \mid \tau \in \dataset_{sub} \},
\end{equation}
with:
\begin{equation}
z_\tau = \frac{\tilde \psi^\tau}{\|\tilde \psi^\tau\|_2}, \qquad \tilde \psi^\tau = \sum_{t=0}^{|\tau|} \gamma^t \phi(s_t),
\end{equation}

where $\tilde \psi^\tau$ is the empirical feature occupancy induced by the trajectory $\tau$. Accordingly, $z_\tau$ defines the task vector that maximizes the reward value assigned to trajectory $\tau$, i.e., $z_\tau = \arg\max_z (\tilde \psi^\tau)^\top z$. Note that this does not imply that $\tau$ is an optimal trajectory for $R_{z_\tau}$; rather, since $\tau$ is generated by a behavioral policy $\pi_\beta \in \Pi$ (in the data collection process), the empirical feature occupancy $\tilde \psi^\tau$ approximates the true feature occupancy of $\pi_\beta$, which lies in the behavioral space $\Psi$.

By deriving tasks from the offline dataset, we sample task vectors $z_\tau$ that correspond to actual behaviors observed in the environment. These vectors naturally align with the directions in which policies can produce meaningful variation of the return. In contrast, uniformly sampled tasks are equally likely to point in any direction, most of which are orthogonal or poorly aligned with the achievable behaviors. As shown in Proposition~\ref{proposition:unif}, this leads to vanishing variance of the return as the feature dimension increases, making it difficult for the agent to distinguish better policies from worse ones. Sampling from $p_{\text{data}}$ avoids this issue by concentrating on inherently meaningful tasks, allowing the policy to reliably identify optimal behaviors. This creates a more informative learning signal throughout training, which stabilizes policy optimization and ultimately leads to better zero-shot generalization to unseen tasks.

Importantly, this approach is well-aligned with how test tasks are typically designed in practice. In common RL benchmarks, test tasks are physically grounded variations of behavior within the environment dynamics (e.g., different locomotion gaits, jumps, or flips). Although these tasks are not seen during training, they are not arbitrary, as they encourage behaviors that are plausible under the physics of the environment and are therefore likely to be represented within the behavioral space $\Psi$.

\subsection{Learning a Behavioral Task Distribution}

In practice, to capture the distribution of achievable tasks, we fit a parametric density model \( p_\theta(z) \) to the empirical task set \( p_{\text{data}} \). We refer to this fitted distribution as the Behavioral Task Distribution (BTD). 
To construct $p_{data}$, we sample $N_\tau$ subtrajectories of random lengths from the dataset, and extract task vectors from these subtrajectories as described in Section~\ref{sec:p_data}.
In our implementation, \( p_\theta \) is a Gaussian Mixture Model trained by maximum likelihood on $p_{data}$.

During policy training, we replace uniform sampling of task vectors from the unit hypersphere with sampling from the learned BTD, \( z \sim p_\theta(z) \). All other components of the offline ZSRL algorithm remain unchanged. This modification biases training towards tasks that fall within the behavioral space, while preserving compatibility with existing representation learning and policy optimization pipelines. All implementation details about our method are presented in Appendix~\ref{app:method}.

\section{Experiments}

\begin{figure*}[htbp]
    \centering
    \includegraphics[width=1\linewidth]{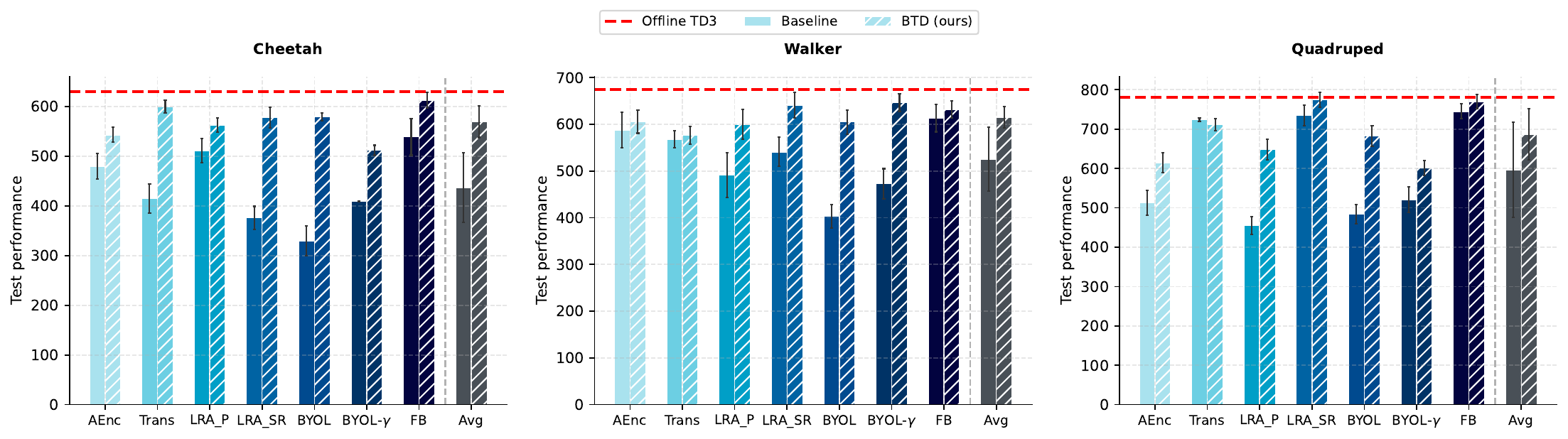}
    \caption{\textbf{Zero-shot performance comparison across task sampling strategies.} Results on Cheetah, Walker, and Quadruped for multiple representations show that BTD-sampling improves average performance and reduces sensitivity to representation choice. The horizontal red line indicates offline TD3 (oracle).}
    
    \label{fig:results}
\end{figure*}

In this section, we aim to evaluate the proposed BTD-sampling method across multiple aspects of ZSRL. Our experiments are designed to answer four questions:
\textbf{(1) Overall zero-shot performance:} How well do policies trained with BTD-sampling perform compared to uniform task sampling across benchmark environments?
\textbf{(2) Robustness to representation methods:} Does BTD-sampling improve performance consistently across different latent encodings?
\textbf{(3) Robustness of BTD to latent dimension:} Does BTD-sampling continue to perform well as the latent dimension grows?
\textbf{(4) Combining task sampling strategies:} Can blending BTD-sampling with baseline task sampling lead to further improvements?

% \subsection{Experimental setup} %peut être supprimé

\paragraph{Environments:}
We conduct experiments on standard ZSRL benchmarks from ExoRL \cite{yarats2022don}, including Cheetah (run backward, walk backward, walk forward, run forward), Walker (stand, walk, run, flip), and Quadruped (stand, walk, run, jump). For the main results, policies are trained using datasets collected by Random Network Distillation \cite{burda2018exploration}. Further details about the tasks are provided in Appendix~\ref{app:env}. Performance for each environment is computed as the average over its test tasks.

\paragraph{Baselines:}
We consider seven baselines:

$\bullet$ \textbf{Reconstruction and dynamics-based methods:} Autoencoder \textbf{(AEnc)} \citep{chen2023auto}, Transition model \textbf{(Trans)}, Lower-Rank Approximation of the transition probability \textbf{(LRA\_P)} and of the successor measure \textbf{(LRA\_SR)} \citep{touati2022does}, to learn state encoder $\phi(s)$, on top of which we train successor features.
% and the Laplacian \textbf{(LAP)} \cite{wu2018laplacian}

$\bullet$ \textbf{Self-supervised representation learning methods:} \textbf{BYOL} \cite{grill2020bootstrap} and \textbf{BYOL-$\gamma$} \citep{lawson2025self} learn the state encoder $\phi(s)$ through latent-predictive learning, which we then use for learning successor features.

$\bullet$  \textbf{Forward-backward representations (FB)} \cite{touati2021learning}.

For more details about the baselines, see Appendix~\ref{annex:detailed_baselines}.

For each baseline, we compare two main variants that differ only in how reward functions are sampled during policy training: the original method using uniformly sampled task vectors (\textbf{Baseline}) and our proposed BTD-sampling variant (\textbf{BTD}). 
% Additional variants, including task vectors derived from full or subtrajectories in the offline dataset (\textbf{Full traj} and \textbf{Subtraj}), are studied in Appendix~\ref{app:ablation}.

\paragraph{Implementation:}
Across all experiments, we first train the state encoder $\phi(s)$, then we fix it and train the policy.
Policy learning is performed using offline TD3 \cite{fujimoto2018addressing}. Network architectures and training hyperparameters are kept fixed across methods to isolate the effect of the task sampling strategy (see Appendix~\ref{app:hyperp}). The latent task dimension is set to $50$ by default, unless specified otherwise. The performance for one environment is computed as the average over its test tasks. For each test task, to estimate the corresponding task vector, we reveal the rewards of $5120$ random states as in \cite{touati2022does}. All reported results are averaged over 5 seeds.

\begin{figure}
    \centering
    \includegraphics[width=0.9\linewidth]{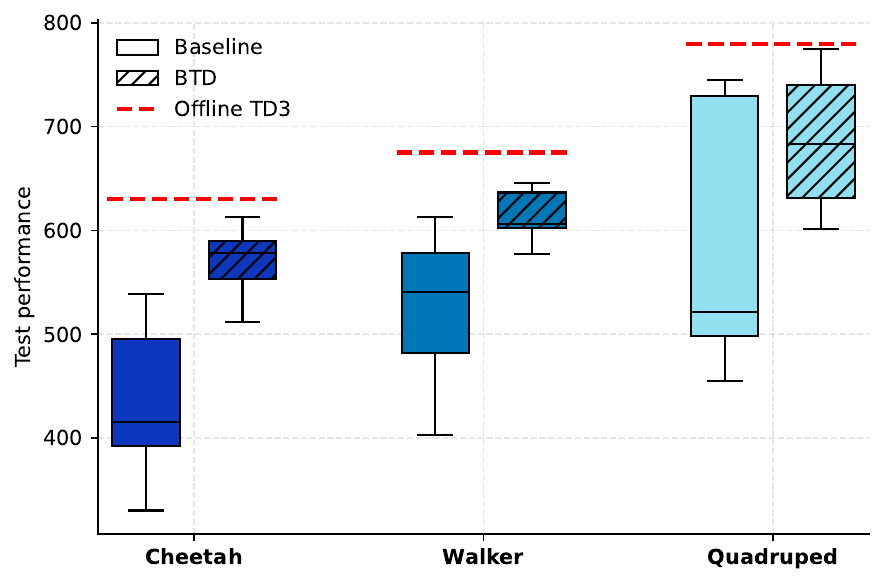}
    \caption{\textbf{Average test performance and variability of all baseline methods (solid) against their BTD-sampling counterparts (hatched) across three environments.} BTD-sampling consistently improves performance while significantly reducing variance, raising the lower bound of performance toward the offline TD3 oracle.}
    \label{fig:boxplot}
\end{figure}

\begin{figure*}[t]
    \centering  
    \includegraphics[width=1\linewidth]{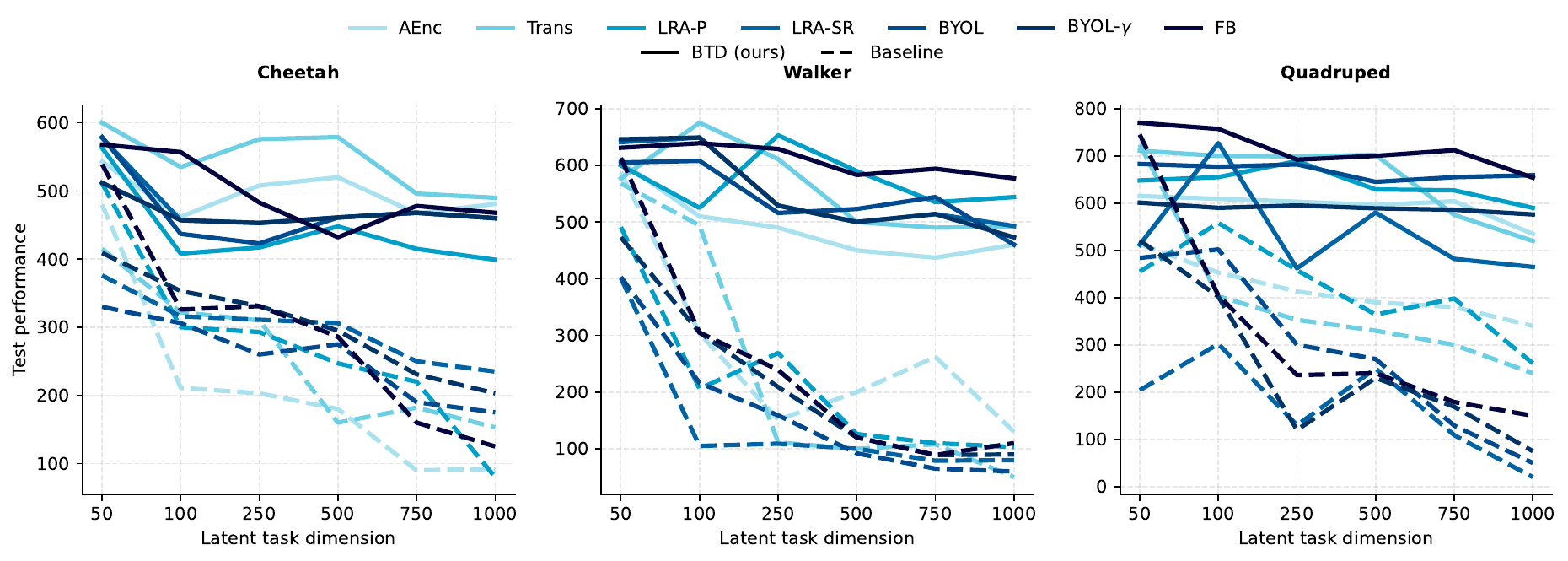}
    \caption{Test performance across a range of latent task dimensions $d$. Baseline performance drops in high dimensions as task vectors become increasingly orthogonal to the behavioral space, diluting the training signal. In contrast, BTD-sampling maintains robust performance by learning a task distribution that stays within the behavioral space, ensuring a consistent and informative training signal.}
    \label{fig:dimension}
\end{figure*}

\subsection{Overall Zero-Shot Performance}

To assess the effectiveness of BTD-sampling for ZSRL, we compare baseline methods with their BTD-sampling counterparts across multiple environments and tasks.

\paragraph{Results}
As illustrated in \figurename~\ref{fig:results}, our proposed BTD-sampling method consistently outperforms the corresponding baselines in nearly all experimental settings, achieving superior test performance in 13 of 15 tests and comparable performance in the remaining two, with an average improvement of \textbf{+20\%}. This gain is particularly substantial in the Cheetah environment and when using \textbf{AEnc} and \textbf{LRA\_P} to learn state embeddings. Moreover, BTD-sampling improves upon the state-of-the-art \textbf{FB} across all three environments. In general, these results show that BTD-sampling reliably improves performance in various locomotion tasks using various baseline methods.

\subsection{Sensitivity to Representation Learning Methods}

To evaluate the robustness of BTD-sampling across all baseline methods, we compute their average performance for each environment and compare the resulting distributions between the baseline and BTD-sampling, providing a measure of both overall performance and variability.

\paragraph{Results}
As illustrated in \figurename~\ref{fig:boxplot}, BTD-sampling consistently improves test performance while reducing the variance in all environments. The resulting test performance distributions are significantly more compact, indicating a higher degree of consistency compared to the baseline. Specifically, BTD-sampling significantly raises the lower bound of performance and improves all methods, including low-performing ones.

\subsection{Effect of task space dimension}
\label{sec:decay}
Proposition~\ref{proposition:unif} predicts that baseline methods are increasingly likely to suffer from signal dilution as the latent task dimension grows, resulting in degraded performance. We evaluate this empirically and investigate whether BTD-sampling can maintain robust performance under the same conditions. To do so, we train policies using both baseline methods and BTD-sampling across a range of latent dimensions \([50, 100, 250, 500, 750, 1000]\) and measure their performance in test tasks. This setup enables a direct comparison of the sensitivity of baselines to high-dimensional task spaces with the robustness offered by BTD-sampling.

\paragraph{Results}
\figurename~\ref{fig:dimension} shows that, consistent with Proposition~\ref{proposition:unif}, the baselines exhibit a sharp performance degradation as the dimension of the task space increases. In all three environments, the baselines struggle in high-dimensional settings, with scores often dropping rapidly as the dimension grows beyond 100. Particularly, in the Cheetah and Walker tasks, the performance of baselines collapses to near-failure performance at dimensions 750 and 1000. 

This trend empirically confirms that uniform sampling approaches are highly sensitive to the dimensionality of the task space. As the dimension grows, uniformly sampled task vectors become increasingly likely to be orthogonal to the behavioral space, resulting in rewards that fail to discriminate between different behaviors.

In contrast, BTD-sampling demonstrates remarkable robustness to high-dimensional task spaces. While the performance of baselines decays rapidly, BTD-sampling maintains relatively stable and superior performance across the entire spectrum of latent dimensions. Even at dimensions as high as 1000, BTD-sampling models retain significantly higher scores compared to their baseline counterparts. These results indicate that BTD-sampling effectively mitigates the adverse effects of high-dimensional spaces. It does so by leveraging the offline dataset to align the task distribution with the behavioral space. This stabilizes the underlying baseline methods against the performance drops predicted by our theoretical analysis.

\subsection{Combining task sampling strategies}

\begin{figure*}[t]
    \centering
    \includegraphics[width=1\linewidth]{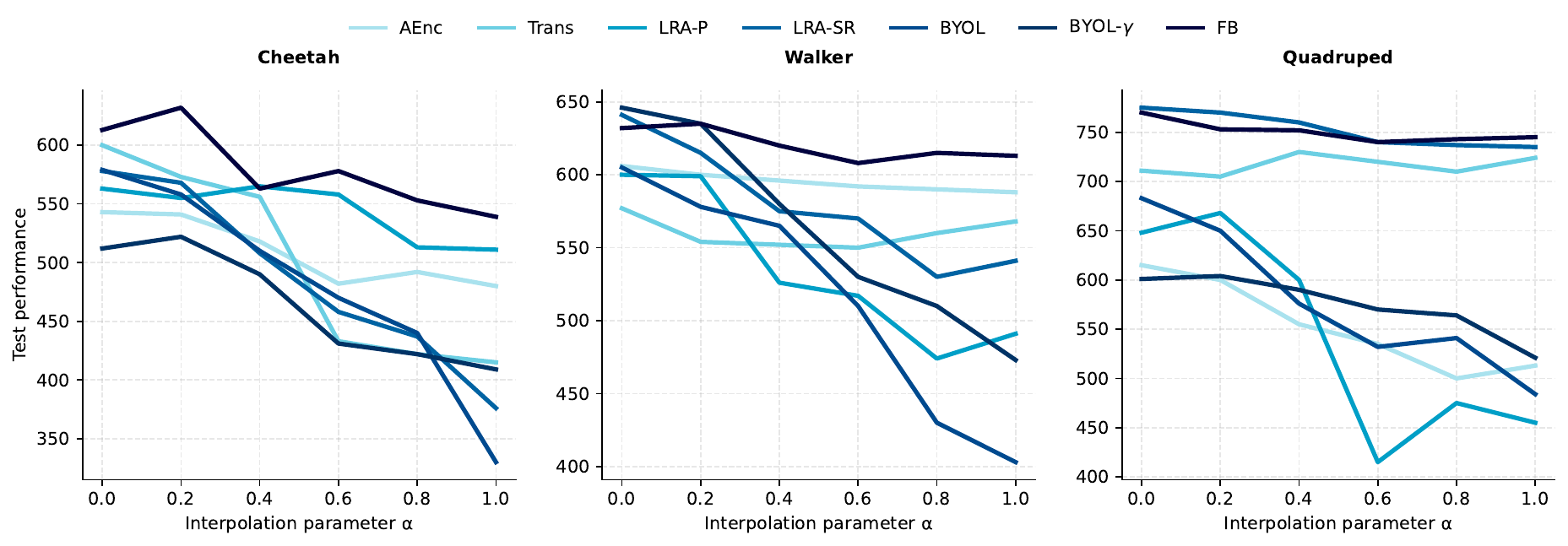}
    \caption{Test performance across Cheetah, Walker, and Quadruped as the interpolation parameter $\alpha$ varies from 0 (pure BTD-sampling) to 1 (pure uniform). Increasing the proportion of uniformly sampled tasks leads to performance degradation across almost all methods. This shows that uniform sampling dilutes the training signal rather than enhancing generalization. This confirms that aligning the task distribution with the behavioral space is essential for effective zero-shot transfer.
    }
    \label{fig:alpha}
    \vspace{-0.5cm}
\end{figure*}

We investigate whether combining BTD-sampling with existing task sampling strategies can further improve zero-shot performance. Specifically, we train policies using mixtures of tasks sampled from BTD and from a uniform distribution over the unit sphere $\mathrm{Unif}(\sphere^{d-1})$. By varying the mixture proportion, we aim to understand whether hybrid task sampling strategies can yield additional gains in generalization, or whether the introduction of uniformly random tasks interferes with the effectiveness of BTD-guided exploration.

\paragraph{Results}
\figurename~\ref{fig:alpha} shows that performance consistently degrades as the interpolation parameter $\alpha$ increases. This trend is particularly pronounced in Cheetah and Walker, where performance drops across all tasks. In Quadruped, the effect is less evident, mainly because the baseline performance is already comparable to that of the BTD-sampling policy. Overall, these results indicate that incorporating uniformly random tasks does not improve robustness and instead harms policy performance. This underscores the importance of aligning the task distribution with the behavioral space to enable effective zero-shot generalization.

\subsection{Ablation study}
We validate key components in Appendix~\ref{app:ablation}:
(1) comparisons of BTD-sampling to alternative sampling heuristics to determine whether the performance of BTD comes from the continuity of the distribution or from the dataset alone;
(2) empirical verification of signal dilution, as demonstrated in Proposition~\ref{proposition:unif};
(3) effect of the number of GMM components on zero-shot performance; 
(4) effect of the number of subtrajectories $N_\tau$ on GMM estimation quality; and
(5) visualization of the task space to assess how effective uniform sampling is at recovering behaviorally grounded tasks.

\section{Conclusion}

In this work, we identified a flaw in current offline zero-shot reinforcement learning (ZSRL): the reliance on uniform task sampling from the unit hypersphere during policy training. Our theoretical analysis reveals a "signal dilution" phenomenon in which reward variance vanishes as task dimensions increase, causing the learning signal to collapse. Crucially, we observed that the task distributions derived from the offline dataset is less affected by this growth, providing a robust signal regardless of the ambient dimension.

To back up this observation, we introduced the Behavioral Task Distribution (BTD) sampling, which extracts relevant task vectors from offline data to guide policy training. By grounding the task distribution in achievable behavioral spaces, BTD-sampling ensures that the learning process remains informative at any scale. Our empirical results demonstrate that BTD-sampling consistently outperforms standard baselines, achieving an average 20\% improvement in zero-shot performance in the kind of physically relevant tasks one actually cares about in real-world deployment.
Ultimately, our findings highlight that in offline ZSRL, defining which tasks to learn is as important as learning how to solve them, and that building on data-driven task distributions offers a scalable path toward more generalizable agents.
% \raggedbottom

\paragraph{Limitations.} 
BTD is bounded by the behavioral support of the offline dataset: task vectors are extracted from observed trajectories, so feasible but rarely observed behaviors may be underrepresented in the learned distribution. 
Additionally, BTD inherits the linear reward assumption of SF/FB methods, as rewards outside the linear span of $\phi$ cannot be recovered regardless of the task sampling strategy. 
Finally, our experiments focus on locomotion tasks from the ExORL benchmark. Generalization to visually complex or contact-rich manipulation environments remains an open direction.

\section*{Impact Statement}
Our method operates entirely on pre-collected offline datasets and does not 
introduce new capabilities for harmful applications beyond those of existing offline RL 
methods. We do not anticipate significant negative societal impacts from this research.

\section*{Acknowledgments}
Experiments presented in this paper were carried out using the HPC resources of IDRIS under the allocation 2025-[AD011016374] made by GENCI.
This work was supported by the Sorbonne Center for Artificial Intelligence (SCAI).

\bibliography{example_paper.bib}
\bibliographystyle{icml2026}

\onecolumn
\appendix

\section{Proof of Proposition~\ref{proposition:unif}}
\begin{proof}
\label{proof:unif}

For a fixed task $z$, the variance of the returns is determined by the projection of the task vector onto the directions of variance in the behavioral space:
\begin{align}
\mathrm{Var}_{\pi}(J(\pi, z))
&= \mathrm{Var}_{\pi}\!\left((\psi^\pi)^\top z\right) \\
&= \mathbb{E}_{\pi}\!\left[ \left( (\psi^\pi - \bar{\psi})^\top z \right)^2 \right] \\
&= \mathbb{E}_{\pi}\!\left[ z^\top (\psi^\pi - \bar{\psi})(\psi^\pi - \bar{\psi})^\top z \right] \\
&= z^\top \left( \mathbb{E}_{\pi}\!\left[ (\psi^\pi - \bar{\psi})(\psi^\pi - \bar{\psi})^\top \right] \right) z \\
&= z^\top \Sigma_\Psi z .
\end{align}

Now consider the expectation over the task distribution $z$. Using $\mathrm{Tr}(ABC) = \mathrm{Tr}(BCA)$ and linearity of expectation:
\begin{align}
\mathbb{E}_{z}\!\left[ z^\top \Sigma_\Psi z \right]
&= \mathbb{E}_{z}\!\left[ \mathrm{Tr}(z^\top \Sigma_\Psi z) \right] \\
&= \mathbb{E}_{z}\!\left[ \mathrm{Tr}(\Sigma_\Psi z z^\top) \right] \\
&= \mathrm{Tr}\!\left( \Sigma_\Psi \, \mathbb{E}_{z}[z z^\top] \right).
\end{align}

For a uniform random vector on the unit hypersphere $\mathbb{S}^{d-1}$,
\[
\mathbb{E}_{z}[z z^\top] = \frac{1}{d} I_d .
\]
Substituting yields
\begin{equation}
\mathbb{E}_{z}\!\left[ \mathrm{Var}_{\pi}(J(\pi, z)) \right]
= \frac{1}{d} \mathrm{Tr}(\Sigma_\Psi) = \frac{1}{d} \sum_{i=1}^d \lambda_i.
\end{equation}

The goal is to prove that this converges to 0 as the dimension d grows. 
% For this purpose, we define the sequence $U_n = \frac{1}{n} \sum_{i=1}^n \lambda_i$.

% The feature occupancy $\psi^\pi$ represents the cumulative discounted features of a policy in the MDP. While the dimension $d$ of the representation vector $\phi(s)$ may grow arbitrarily large, the underlying physical system (the MDP) remains fixed. 

The set of all possible behaviors (feature occupancies) is constrained by the environment's dynamics (e.g., gravity, collision limits, transition probabilities). 
We assume that the MDP $\mathcal{M}$ is fixed and independent of the representation dimension $d$: the complexity of the environment does not grow as $d$ increases. Consequently, the variance of the valid behaviors is concentrated in a limited number of effective dimensions. 
As we increase $d$, we add new dimensions that capture finer details of the behavioral space. However, because the total information content of the MDP is finite, the variance contributed by these additional dimensions must eventually vanish.

Formally, this implies that the sequence of eigenvalues $\{\lambda_i\}_{i=1}^\infty$ satisfy:
\begin{equation}
    \lim_{i \to \infty} \lambda_i = 0.
\end{equation}
This condition allows the total variance to grow, provided the incremental variance added by each new dimension diminishes. Using \textbf{Cauchy's Limit Theorem}:

\begin{equation}
    \lim_{i \to \infty} \lambda_i = 0 \implies \lim_{d \to \infty} \frac{1}{d} \sum_{i=1}^d \lambda_i = 0.
\end{equation}
This completes the proof.

\end{proof}

\newpage
\section{Method details}
\label{app:method}

\subsection{Architecture}

We maintain consistent network architectures across all evaluated methods:
\begin{itemize}
    \item The successor feature network $\phi(s)$ and backward representation $B(s)$ are implemented as feedforward networks with three 256-unit hidden layers that output $L_2$-normalized embeddings. We enforce the orthonormality of learned features through a regularization term.
    \item Our policy employs a TD3-based architecture \cite{fujimoto2018addressing} with task-conditioned networks: the actor $\pi(s,z)$ uses parallel preprocessing streams (one for state features, another for task-conditioned states) with LayerNorm-Tanh initialization followed by ReLU activations, converging through a shared trunk network with Tanh output activation and truncated normal exploration ($\sigma=0.2$). The critic uses twin Q-networks with analogous two-stream preprocessing of $(s,a,z)$ tuples for stable value estimation.
\end{itemize}

\subsection{Training the State Encoder}

The state encoder $\phi(s)$ is trained using one of several representation learning objectives (see Appendix \ref{annex:detailed_baselines}). 

The training process for FB representations is slightly different from the other methods. Although the FB objective simultaneously learns a backward embedding $B(s)$ and a policy $\pi$, we do not use them directly. Instead, to maintain a unified framework across all experimental baselines, we define the final state encoder as $\phi(s) = \mathbb{E}[BB^\top]^{-1} B(s)$ and train a new conditional policy from scratch using the reward $R_z(s) = \phi(s)^\top z$. We found that this approach does not hurt the performance of FB. Rather, it slightly improves it in the experiments, probably because the policy is trained on a fixed state representation, rather than a non-stationary one as in standard FB.

\subsection{Training the Policy}

During policy training, instead of sampling 2048 tuples $(s_t, a_t, s_{t+1}, z)$, we first sample 32 task vectors and then sample 64 transitions $(s_t, a_t, s_{t+1})$ per task. This was more efficient in our experiments, as sampling multiple transitions per reward function allows the critic to more rapidly capture the structure of each reward landscape.

\subsection{Hyperparameters}
\label{app:hyperp}

\begin{table}[ht]
\centering
\caption{Hyperparameters for Dataset, Representation Learning, and Policy Training}
\label{tab:hyperparameters}
\begin{tabular}{@{}llc@{}}
\toprule
\textbf{Category} & \textbf{Parameter} & \textbf{Value} \\ \midrule
\textbf{Dataset} & Total Transitions & $10^{7}$ \\
 & Number of Trajectories & $10^{4}$ \\
 & Trajectory Length & $10^{3}$ \\
 & Replay Buffer Size & $5 \times 10^{6}$ ($10 \times 10^{6}$ for maze) \\ \midrule
\textbf{SF Training} & Gradient Updates & $10^{5}$ \\
 & Batch Size & $1024$ \\
 & Learning Rate & $10^{-4}$ \\
 & Target Update Rate ($\tau$) & $0.01$ \\
 & Discount Factor ($\gamma$) & $0.99$ \\
 & Orthonormality Weight & $1$ \\
  & Optimizer & Adam \\ \midrule
\textbf{FB Training} & Gradient Updates & $2 \times 10^{6}$ \\
 & Batch Size & $1024$ \\
 & Learning Rate & $10^{-4}$ \\
 & Target Momentum & $0.99$ \\
 & Orthonormality Weight & $1$ \\
 & Optimizer & Adam \\ \midrule
\textbf{Policy (TD3)} & Gradient Updates & $10^{6}$ \\
 & Batch Size & $2048$ \\
 & Actor Learning Rate & $10^{-4}$ \\
 & Critic Learning Rate & $10^{-4}$ \\
 & Target Update Rate ($\tau$) & $0.01$ \\
 & Discount Factor ($\gamma$) & $0.99$ \\
 & Exploration Noise ($\sigma$) & $0.2$ \\
 & Policy Smoothing Truncation & $0.3$ \\
  & Optimizer & Adam \\ \bottomrule
\end{tabular}
\end{table}

\newpage
\section{Environment details}
\label{app:env}

This section details the physical dimensions and reward objectives for the continuous control environments used in our evaluations. All the environments considered in this paper are based on the DeepMind Control
Suite \cite{tassa2018deepmind}:

\paragraph{Walker}
Walker is a planar bipedal robot. The 24-dimensional state vector contains joint positions and velocities. Actions are 6-dimensional vectors. We evaluate performance across four distinct tasks at test time:
\begin{itemize}
  \item \textbf{Stand:} Reward is a weighted combination of terms that encourage an upright torso orientation and maintain a minimum torso height.
  \item \textbf{Walk:} Reward includes a component that encourages a minimum forward velocity, capped at $4\,\mathrm{m/s}$.
  \item \textbf{Run:} Reward includes a component that encourages a higher minimum forward velocity, capped at $10\,\mathrm{m/s}$.
  \item \textbf{Flip:} Reward includes a component that encourages the agent to achieve a minimum angular momentum.
\end{itemize}

\paragraph{Cheetah}
The Cheetah is a planar biped designed. The 17-dimensional state vector contains positions and velocities. Actions are 6-dimensional vectors. We consider four velocity-based tasks at test time:
\begin{itemize}
  \item \textbf{Walk:} Reward is linearly proportional to forward velocity, capped at $2\,\mathrm{m/s}$.
  \item \textbf{Run:} Reward is linearly proportional to forward velocity, capped at $10\,\mathrm{m/s}$.
  \item \textbf{Walk Backward:} Reward is linearly proportional to backward velocity, capped at $-2\,\mathrm{m/s}$.
  \item \textbf{Run Backward:} Reward is linearly proportional to backward velocity, capped at $-10\,\mathrm{m/s}$.
\end{itemize}

\paragraph{Quadruped}
The Quadruped is a four-legged ant-like robot navigating in 3D space. The state and action spaces are 78-dimensional and 12-dimensional, respectively. We evaluate the following four tasks at test time:
\begin{itemize}
  \item \textbf{Stand:} Reward encourages maintaining an upright torso posture.
  \item \textbf{Walk:} Reward includes a component that encourages a minimum forward torso velocity, capped at $0.5\,\mathrm{m/s}$.
  \item \textbf{Run:} Reward includes a component that encourages higher forward torso velocities, capped at $5\,\mathrm{m/s}$.
  \item \textbf{Jump:} Reward is proportional to the vertical displacement of the torso, it encourages the agent to maximize its peak height relative to the ground.
\end{itemize}

Across all environments, we utilize datasets from the ExORL \cite{yarats2022don} benchmark collected via Random Network Distillation (RND) \cite{burda2018exploration}. Each dataset consists of $10^{7}$ total transitions, composed of $10^{4}$ trajectories with a length of $10^{3}$ steps each.

\newpage
\section{Baselines details}
\label{annex:detailed_baselines}

This section provides objective functions for all baseline representation learning methods used in our experiments:

\begin{itemize}
    \item \textbf{Autoencoder (AEnc)} \cite{chen2023auto} learns a compact representation $\phi(s)$ by minimizing the reconstruction error of the input state through an encoder-decoder architecture, capturing the most salient features of the state space. Here, $\phi$ is the encoder $\phi(s)$ and $f$ is the decoder.
    \[
    \min_{f,\phi} \mathbb{E}_{s \sim \mathcal{D}} \left[ (f(\phi(s_t)) - s)^2 \right].
    \]

    \item \textbf{Transition Model (Trans)} learns representations by training a model to predict the next state $s_{t+1}$ given the current state-action pair $(s_t, a_t)$, thereby encoding the environment's local dynamics into the latent space. Here, $\phi$ is the encoder $\phi(s)$ and $f$ is the latent transition predictor.
    \[
    \min_{f,\phi} \mathbb{E}_{(s_t, a_t, s_{t+1}) \sim \mathcal{D}} \left[ (f(\phi(s_t), a_t) - s_{t+1})^2 \right].
    \]

    \item \textbf{Lower-Rank Approximation of the transition probability (LRA\_P)} \cite{touati2022does} factorizes the one-step transition probability $P(s'|s, a)$ into a low-rank product $f(s, a)^\top \phi(s')$, effectively performing a spectral decomposition of the transition operator. Here, $f$ and $\phi$ represent the left and right singular vectors of the transition matrix.
    \[
    \min_{f,\phi} \frac{1}{2} \mathbb{E}_{\substack{(s_t, a_t) \sim \mathcal{D} \\ s' \sim \mathcal{D}}} \left[ \left( (f(s_t, a_t)^\top \phi(s')) \right)^2 \right] - \mathbb{E}_{(s_t, a_t, s_{t+1}) \sim \mathcal{D}} \left[ f(s_t, a_t)^\top \phi(s_{t+1}) \right].
    \]

    \item \textbf{Lower-Rank Approximation of the successor measure (LRA\_SR)} \cite{touati2022does} learns a low-rank factorization of the successor measure $M(s, s')$ using temporal difference learning. Here, $f$ and $\phi$ are the learned factors that represent the state and its temporally extended features.
    \[
    \min_{f,\phi} \mathbb{E}_{\substack{(s_t, s_{t+1}) \sim \mathcal{D} \\ s' \sim \mathcal{D}}} \left[ \left( f(s_t)^\top \phi(s') - \gamma f(s_{t+1})^\top \bar{\phi}(s') \right)^2 \right] - 2 \mathbb{E}_{(s_t, s_{t+1}) \sim \mathcal{D}} \left[ f(s_t)^\top \phi(s_{t+1}) \right].
    \]

    \item \textbf{Bootstrap Your Own Latent (BYOL) \cite{grill2020bootstrap}} learns a latent space by predicting the next state representation from the current state-action pair using a predictor and a target network. Here, $\phi$ is the encoder and $\psi$ is the online predictor.
    \[
    \min_{\phi, \psi} \mathbb{E}_{(s_t, a_t, s_{t+1}) \sim \mathcal{D}} \left[ \| \psi(\phi(s_t), a_t) - \text{sg}(\phi(s_{t+1})) \|^2_2 \right].
    \]
    
    \item \textbf{Bootstrap Your Own Latent with Bidirectional Prediction (BYOL-$\gamma$) \cite{lawson2025self}} captures long-range temporal consistency by predicting future states at horizons sampled geometrically. Here, $\psi_f$ is the forward predictor towards future states and $\psi_b$ is the backward predictor towards past states.
    \[
    \min_{\phi, \psi_f, \psi_b} \mathbb{E}_{\substack{(s_t, a) \sim \mathcal{D} \\ k \sim \text{Geom}(1-\gamma)}} \left[ \| \psi_f(\phi(s_t), a) - \text{sg}(\phi(s_{t+k})) \|^2_2 + \| \psi_b(\text{sg}(\phi(s_{t+k}))) - \phi(s_t) \|^2_2 \right].
    \]

    \item \textbf{Forward-Backward Representations (FB)} \cite{touati2021learning} jointly learns forward $F(s, a, z)$ and backward $B(s)$ representations to estimate successor measures for any arbitrary policy $\pi_z$. Here, $F$ predicts future behavior under a goal/task $z$, while $B$ encodes state-specific features.
    \[
    \min_{F,B} \mathbb{E}_{\substack{(s_t, a_t, s_{t+1}) \sim \mathcal{D} \\ s' \sim \mathcal{D}}} \left[ \left( F(s_t, a_t, z)^\top B(s') - \gamma F(s_{t+1}, \pi_z(s_{t+1}), z)^\top \bar{B}(s') \right)^2 \right] - 2 \mathbb{E}_{(s_t, a_t, s_{t+1}) \sim \mathcal{D}} \left[ F(s_t, a_t, z)^\top B(s_{t+1}) \right].
    \]
\end{itemize}

\section{Additional Experiments: Ablation Study}
\label{app:ablation}

This section presents a series of ablation studies designed to isolate and validate the key design choices underlying BTD-sampling. We investigate (1) whether performance gains arise from continuous density modeling rather than from heuristic task extraction, (2) empirically validate the signal-dilution phenomenon predicted in Proposition~\ref{proposition:unif}, (3) analyze the sensitivity to the number of mixture components, and (4) visualize the structure of the learned task space.

\subsection{Comparison to Heuristic Task Sampling}

\begin{table}[thbp]
\caption{\textbf{Ablation of task sampling strategies.} Zero-shot test performance comparison of BTD against heuristic baselines. BTD-sampling outperforms (Subtraj) sampling by interpolating the task space, while (full traj) sampling fails, as full trajectories are dominated by failures where the useful signal is minority.}\label{tab:sampling-ablation}
\centering
\small
\begin{tblr}{
  column{even} = {c},
  column{3} = {c},
  hline{1-2,6} = {-}{},
}
Sampling Strategy  & Cheetah            & Walker             & Quadruped          \\
Uniform    & 464~$\pm$~60          & 560 $\pm$~42          & 634 $\pm$~124         \\
Full trajectory  & 482~$\pm$ 37          & 488~$\pm$~11          & 421 $\pm$ 30          \\
Subtrajectory   & 512~$\pm$ 24          & 579 $\pm$~25          & 678 $\pm$ 43          \\
\textbf{BTD-sampling (ours)} & \textbf{579 $\pm$~25} & \textbf{611 $\pm$~23} & \textbf{704 $\pm$~64} 
\end{tblr}
\end{table}

This experiment investigates whether the gains observed with BTD-sampling stem from its continuous density modeling or merely from extracting tasks from the offline dataset through $p_{data}$. We compare BTD-sampling with two heuristic baselines:
(i) \textbf{Subtrajectory sampling}, where tasks are directly sampled from $p_{data}$;
(ii) \textbf{Full trajectory sampling}, where task vectors are computed from entire trajectories.

\paragraph{Results:} BTD-sampling consistently outperforms \textbf{Subtrajectory sampling} across all benchmarks (Table~\ref{tab:sampling-ablation}). Although both rely on $p_{data}$, BTD enables interpolation between tasks and yields more diverse tasks.

In contrast, \textbf{Full trajectory sampling} performs poorly, particularly on Quadruped. As full trajectories contain multiple behaviors over time, they can produce task vectors where meaningful primitive behaviors are drowned into more noisy signal. In environments where recovery from failure is difficult, such as Quadruped, the resulting task vectors are dominated by failure and provide poor training reward functions. 

\subsection{Empirical Validation of Signal Dilution}

\begin{figure}[H]
    \centering
    \includegraphics[width=1\linewidth]{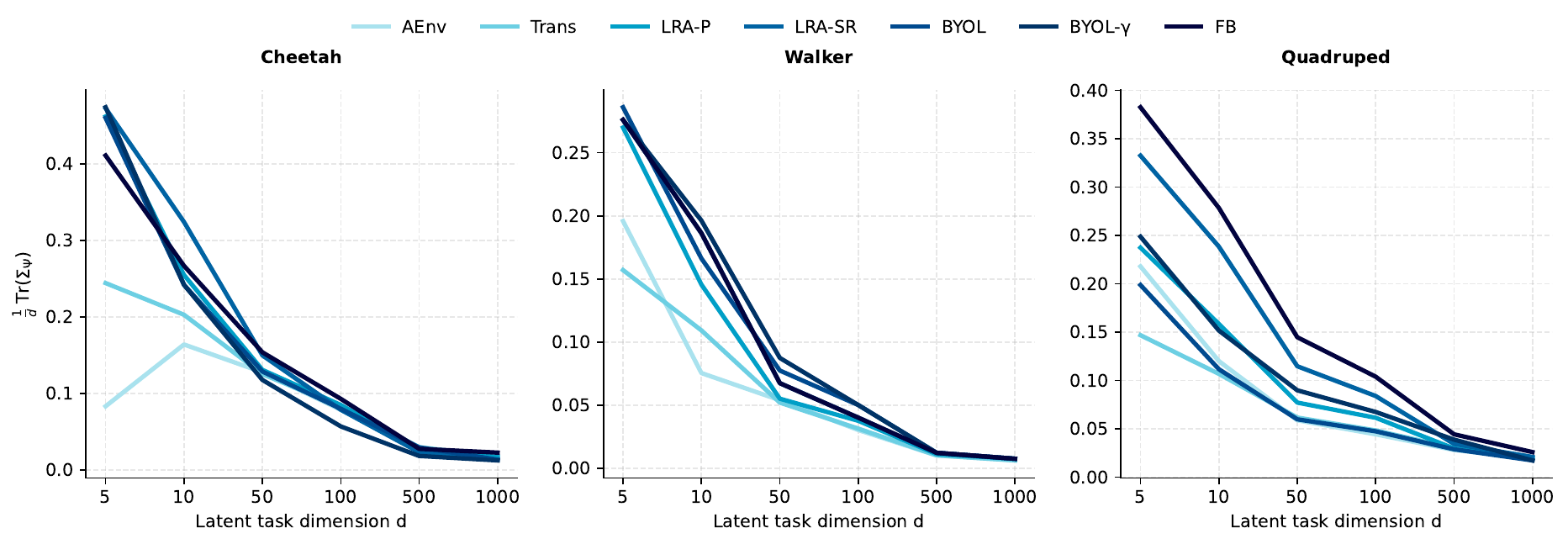}
    \caption{\textbf{Signal dilution under increasing feature dimension.} The normalized trace of the empirical feature-occupancy covariance $(1/d)\mathrm{Tr}(\Sigma_\Psi)$ decays rapidly as the feature dimension $d$ increases across environments and baselines.}
    \label{fig:signal_dilution}
\end{figure}
    
Proposition~\ref{proposition:unif} characterizes the decay of the expected return variance when the tasks are uniformly sampled. The theoretical quantity $\Psi$ represents the set of all possible feature occupancies under the MDP, which is not directly observable. We therefore construct an empirical counterpart $\tilde{\Psi}$ from the offline dataset by sampling $100,000$ random subtrajectories $\tau$ and computing their discounted feature occupancies:

\begin{equation}
\tilde \Psi = \{ \tilde \psi^\tau \mid \tau \in \dataset_{sub} \}, \qquad \tilde \psi^\tau = \sum_{t=0}^{|\tau|} \gamma^t \phi(s_t).
\end{equation}

For each environment, each baseline method, and each feature dimension $d \in \{5, 10, 50, 100, 500, 1000\}$, we train the corresponding state representation $\phi(s)$. We then compute $\tilde{\Psi}$ and its covariance $\Sigma_{\tilde{\Psi}}$, and report the upper bound quantity $(1/d) \mathrm{Tr}(\Sigma_{\tilde{\Psi}})$.

\paragraph{Results:} \figurename~\ref{fig:signal_dilution} shows a rapid exponential decay of $(1/d) \mathrm{Tr}(\Sigma_{\tilde{\Psi}})$ as $d$ increases, confirming the signal-dilution effect predicted by Proposition~\ref{proposition:unif}. This confirms that the vanishing variance is not an artifact of a specific representation or environment, but a fundamental consequence of uniform sampling.

\subsection{Effect of number of components in the GMM}

This ablation study examines the impact of the number of Gaussian Mixture Model (GMM) components on the test performance across the benchmark environments and all the baselines, across a range of 5 to 200 GMM components.

\begin{figure}[H]
    \centering
    \includegraphics[width=1\linewidth]{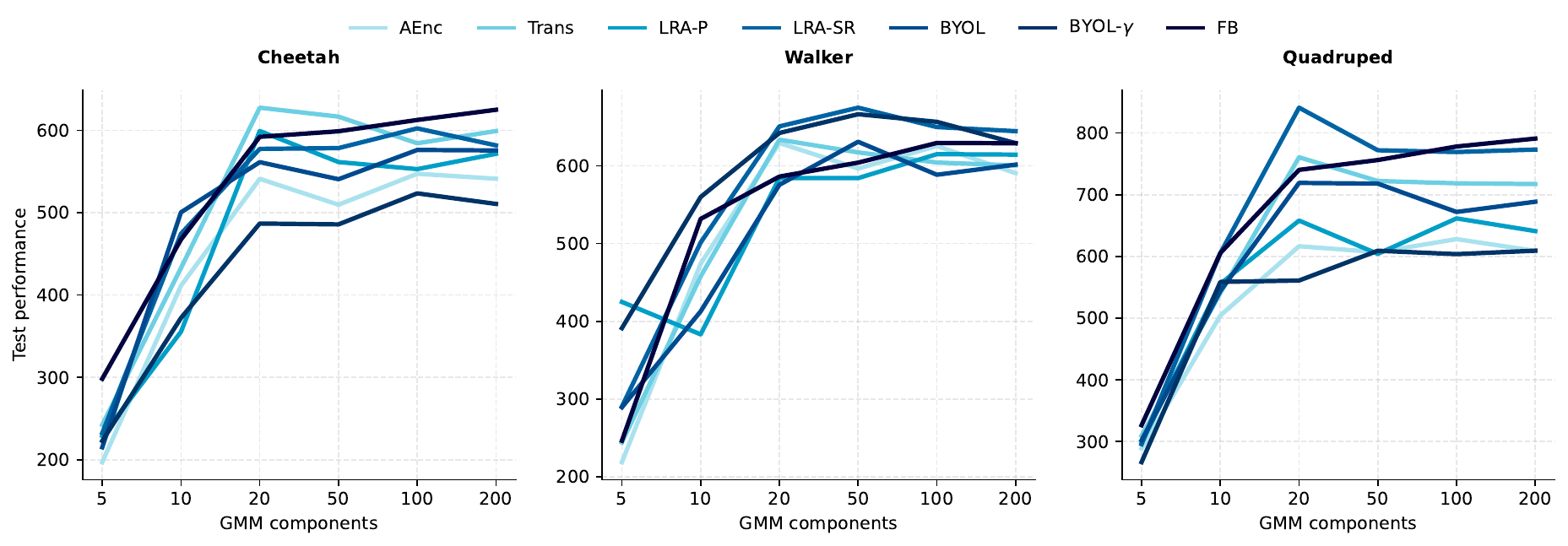}
    \caption{\textbf{Effect of the number of GMM components on test performance.} Performance improves rapidly with more components up to about 20, after which gains generally plateau across environments and baselines.}
    \label{fig:gmm}
\end{figure}

\paragraph{Results:} Across all environments and baseline methods, we observe a consistent trend: performance increases rapidly until approximately 20 components, after which gains diminish. This indicates that the behavioral task distribution can be effectively captured with a relatively small number of components ($20$), and the exact choice beyond this point is not critical.

\subsection{Effect of the Number of Subtrajectories $N_\tau$}

This ablation study examines the impact of the number of subtrajectories $N_\tau$ sampled from the dataset to fit the GMM, across a range from $10$ to $10^6$. We use FB (BTD) as the baseline method.

\begin{figure}[H]
    \centering
    \includegraphics[width=1\linewidth]{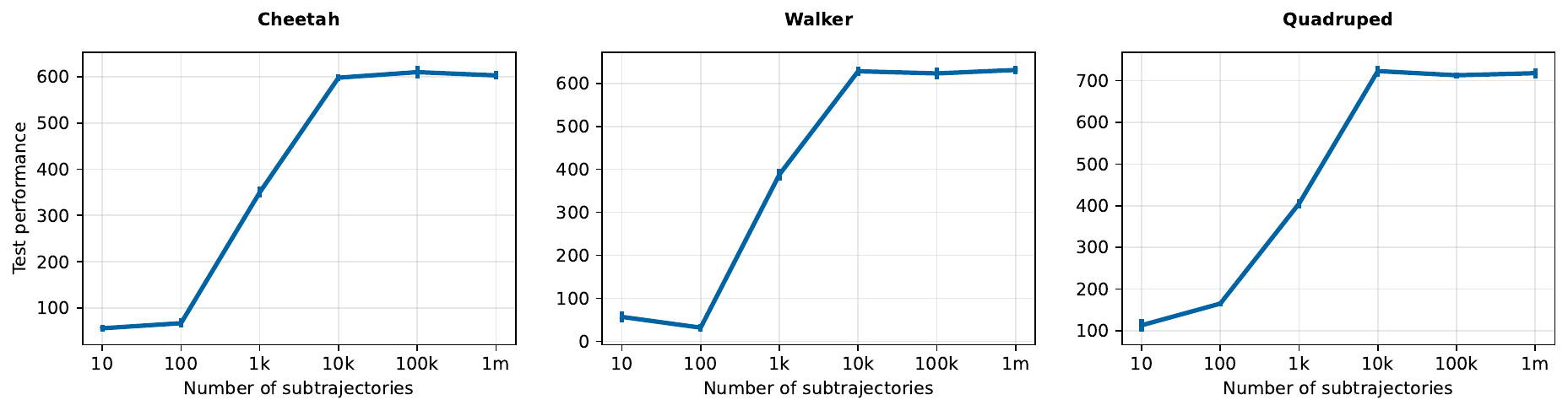}
    \caption{\textbf{Effect of the number of subtrajectories $N_\tau$ on test performance.} Performance improves rapidly with more subtrajectories up to about $10k$, after which gains plateau across environments, indicating sufficient coverage of the behavioral task distribution.}
    \label{fig:gmm_num_subtrajectories}
\end{figure}

\paragraph{Results:} Across all three environments, performance improves consistently as $N_\tau$ increases, then saturates around $N_\tau = 10{,}000$, beyond which additional subtrajectories yield negligible gains. Small values of $N_\tau$ produce poor estimates of the behavioral task distribution and consequently low performance, while $N_\tau = 10{,}000$ provides sufficient coverage of the behavioral space. In practice, we use $N_\tau = 10{,}000$ as the default, as it offers a good trade-off between estimation quality and computational cost.

\subsection{Visualization of the Task Space}

We use UMAP projections to visualize the alignment between the empirical task distribution ($z \sim p_{data}$), extracted directly from the offline dataset, and the standard uniform prior ($z \sim \text{Unif}(S^{d-1})$). By comparing these distributions, we evaluate whether the uniform task sampling effectively captures the behavioral task distribution.

\begin{figure}[H]
    \centering
    \includegraphics[width=1\linewidth]{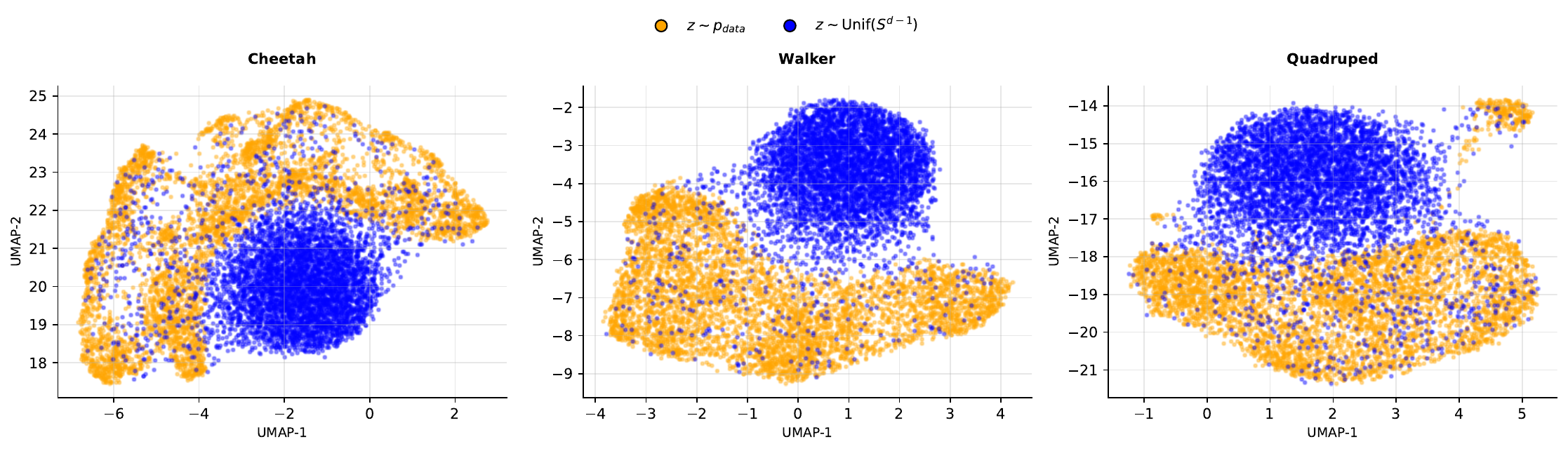}
    \caption{\textbf{UMAP visualization of task distributions.} Comparison between empirical task vectors $z \sim p_{data}$ and uniformly sampled task vectors $z \sim \mathrm{Unif}(\sphere^(d-1)$, showing limited overlap.}
    \label{fig:umap}
\end{figure}

\paragraph{Results:} The visualizations in \figurename~\ref{fig:umap} reveal a distributional shift between the uniform prior and the data-driven task vectors. While the uniform prior theoretically spans the entire hypersphere, our results show only marginal overlap with the empirical task manifold. This shows that the behavioral task vectors reside in a small region of the hypersphere that cannot be efficiently recovered using uniform sampling.

\end{document}